\definecolor{shadecolor}{gray}{0.9}
\definecolor{shadecolor}{gray}{0.9}
\renewcommand{\d}{\mathrm{d}}
\newcommand{\E}{\operatorname{\mathbb E}}
\newcommand\numberthis{\addtocounter{equation}{1}\tag{\theequation}}
\DeclareMathOperator*{\argmin}{\arg\min}
\numberwithin{equation}{section}
\newtheorem{theorem}{Theorem}[section]
\newtheorem{lemma}{Lemma}[section]
\theoremstyle{remark}
\newtheorem{remark}{Remark}[section]
\begin{document}

\setlength{\parskip}{0.5em}

\begin{frontmatter}
\title{Distributed Nonparametric Regression under Communication Constraints}
\runtitle{Distributed Nonparametric Regression under Communication Constraints}
\begin{aug}
\vskip10pt
\author{\fnms{Yuancheng} \snm{Zhu${}^{*}$}\ead[label=e1]{zhuyuanc@wharton.upenn.edu}}
\, \and \,
\author{\fnms{John} \snm{Lafferty${}^{\dag}$}\ead[label=e2]{john.lafferty@yale.edu}}
\vskip10pt
\address{
\begin{tabular}{cc}
${}^*$Department of Statistics & ${}^\dag$Department of Statistics and Data Science\\
University of Pennsylvania & Yale University
\end{tabular}
\\[10pt]
\today\\[5pt]
\vskip10pt
}
\end{aug}

\begin{abstract}
This paper studies the problem of nonparametric estimation of a smooth
function with data distributed across multiple machines.  We assume an
independent sample from a white noise model is collected at each
machine, and an estimator of the underlying true function needs to be
constructed at a central machine.  We place limits on the number of
bits that each machine can use to transmit information to the central
machine.  Our results give both asymptotic lower bounds and matching
upper bounds on the statistical risk under various settings. We
identify three regimes, depending on the relationship among the number
of machines, the size of data available at each machine, and the
communication budget.  When the communication budget is small, the
statistical risk depends solely on this communication bottleneck,
regardless of the sample size. In the regime where the communication
budget is large, the classic minimax risk in the non-distributed
estimation setting is recovered.  In an intermediate regime, the
statistical risk depends on both the sample size and the communication
budget. 
\end{abstract}

\vskip20pt 
\end{frontmatter}

\section{Introduction}

Classic statistical theory studies the difficulty of estimation under various models,
and attempts to find the optimal estimation procedures.
Such studies usually assume that all of the collected data
are available to construct the estimators.
In this paper, we study the problem of statistical estimation with data residing at multiple machines. 
Estimation in distributed settings is becoming common in modern data analysis tasks, as the data can be collected or stored at different locations.
In order to obtain an estimate of some statistical functional,
information needs to be gathered and aggregated from the multiple
locations  to form the final estimate. However, the communication
between machines may be limited. For instance,
there may be a communication budget that limits how much information
can be transmitted. In this setting, it is important to understand how the statistical
risk of estimation degrades as the communication budget becomes more
limited.

A similar problem, called the CEO problem, was first studied in the
electrical engineering community from a rate-distortion-theory
perspective \citep{berger1996ceo, viswanathan1997quadratic}.  More
recently, several studies have focused on more specific statistical
tasks and models; see, for example, \citet{zhang2013information,
  shamir2014fundamental, battey2015distributed,
  braverman2016communication, diakonikolas2017communication,
  fan2017distributed, lee2017communication} treating mean estimation,
regression, principal eigenspace estimation, discrete density
estimation and other problems. Most of this existing research focuses
on parametric and discrete models, where the parameter of interest has
a finite dimension.  
While there are also studies of nonparametric problems and models
\citep{zhang2013divide, blanchard2016parallelizing, chang2017distributed, shang2017computational},
the fundamental limits of distributed nonparametric estimation 
are still under-explored. 

In this paper, we consider a fundamental nonparametric estimation task---estimating a smooth function
in the white noise model. 
We assume observation of the random process
\begin{equation}
\d Y(t) = f(t)\d t + \frac{1}{\sqrt{n}}\d W(t),\quad 0\leq t\leq 1, 
\label{eqn:whitenoisemodel}
\end{equation}
where $\frac{1}{\sqrt{n}}$ is the noise level, $W(t)$ is a standard Wiener process,
and $f$ is the underlying function to be estimated.
The white noise model is a centerpiece of nonparametric estimation,
being asymptotically equivalent to nonparametric regression 
and density estimation \citep{brown1996asymptotic,nussbaum1996asymptotic}.
We intentionally express the noise level as $\frac{1}{\sqrt{n}}$ to reflect the connection between
the white noise model and a nonparametric regression problem with $n$ evenly spaced observations. 
We focus on the important case where the regression function
lies in the Sobolev space $\mathcal F(\alpha, c)$ of order $\alpha$ and radius $c$;
the exact definition of this function space is given in the following section.

In a distributed setting, instead of observing a single sample path $Y(t)$, we assume there are $m$ machines, 
each of which observes an independent copy of the stochastic process.
That is, the $j$th machine gets
\[
\d Y_j(t) = f(t)\d t + \frac{1}{\sqrt{n}} \d W_j(t),\quad 0\leq t\leq 1, 
\]
for $j=1,\dots, m$ where $W_j(t)$'s are mutually independent standard Wiener processes.
Furthermore, each machine has a budget of $b$ bits to communicate with a central machine, 
where a final estimate $\hat f$ is formed based
on the messages received from the $m$ machines. 
Specifically, we denote by $\Pi_j$ the message that the $j$th machine
sends to the central estimating machine; 
each $\Pi_j$ can be viewed as a (possibly random) functional of the stochastic process $Y_j(t)$.
In this way, the tuple $(n,m,b)$ defines a problem instance for the function class $\mathcal F(\alpha, c)$.
We use the minimax risk
\begin{align*}
R(n,m,b;\mathcal F(\alpha, c)) = \inf_{\hat f,\, \Pi_{1:m}} \sup_{f\in\mathcal F(\alpha, c)}\,\E \|f-\hat f(\Pi_1, \dots, \Pi_m)\|^2
\end{align*}
to quantify the hardness of distributed estimation of $f$ in the Sobolev space $\mathcal F(\alpha, c)$.

The main contribution of the paper is to identify the following three asymptotic regimes.
\begin{itemize}
\item An \emph{insufficient regime} where $mb \ll n^{\frac{1}{2\alpha+1}}$.
Under this scaling, the total number of bits, $mb$, is insufficient to
preserve the classical, non-distributed, minimax rate of convergence
for the sample size $n$ on a single machine. Therefore, the communication budget becomes the main bottleneck, 
and we have
\[
R(n,m,b;\mathcal F(\alpha, c)) \asymp (mb)^{-2\alpha}.
\]
\item A \emph{sufficient regime} where $b \gg (mn)^{\frac{1}{2\alpha+1}}$.
In this case, the number of bits allowed per machine is relatively large, 
and we have the minimax risk
\[
R(n,m,b;\mathcal F(\alpha, c)) \asymp (mn)^{-\frac{2\alpha}{2\alpha+1}}.
\]
Note that this is also the optimal convergence rate if all the data were available at the central machine. 
\item An \emph{intermediate regime} where $b\lesssim (mn)^{\frac{1}{2\alpha+1}}$ and $mb\gtrsim n^{\frac{1}{2\alpha+1}}$.
In this regime, the minimax risk depends on all three parameters, and
scales according to
\[
R(n,m,b;\mathcal F(\alpha, c)) \asymp (mnb)^{-\frac{\alpha}{\alpha+1}}.
\]
\end{itemize}
Together, these three regimes give a sharp characterization of the statistical
behavior of distributed nonparametric estimation for the Sobolev space
${\mathcal F}(\alpha,c)$ under communication constraints, covering the
full range of parameters and problem settings.  The Bayesian framework
adopted in this paper to establish the lower bounds is different from
the techniques used in previous work, which typically rely on Fano's
lemma and the strong data processing inequality.  Finally, we note
that an essentially equivalent set of minimax convergence rates is obtained
in a simultaneously and independently written paper
by \citet{szabo2018adaptive}.

The paper is organized as follows. In the next section, we explain our
notation and give a brief introduction of nonparametric estimation
over a Sobolev space for the usual non-distributed setting and a distributed setting. 
In Section \ref{sec:mainresults}, we state our main results on the risk of distributed nonparametric estimation with communication constraints. 
We outline the proof strategy for the lower bounds in Section
\ref{subsec:lowerbounds}, deferring some of the technical details and
proofs to the supplementary material.
In Section \ref{sec:achievability}, we show achievability of the lower
bounds by a particular distributed protocol and estimator. 
We conclude the paper with a discussion of possible directions for future work. 

\section{Problem formulation}

The Sobolev space of order $\alpha$ and radius $c$ is defined by 
\begin{align*}
\mathcal F(\alpha, c) & = \bigg\{f: f^{(\alpha-1)} \text{ is absolutely continuous}, \int_0^1(f^{(\alpha)}(t))^2)\d t\leq c^2,\text{ and }f\in[0,1] \to \mathbb R\bigg\}.
\end{align*}
Intuitively, it is a space of functions having a certain degree of smoothness. 
The periodic Sobolev space is defined by
\begin{align*}
\tilde{\mathcal F}(\alpha, c) = & F(\alpha, c) \;\bigcap \bigg\{ f^{(j)}(0) = f^{(j)}(1),\, j = 0, 1, \dots, \alpha - 1\bigg\}.
\end{align*}

The white noise model \eqref{eqn:whitenoisemodel} can be reformulated 
in terms of an infinite Gaussian sequence model. 
Let $(\varphi_i)_{i = 1}^\infty$ be the trigonometric basis, and let
\[
\theta_i = \int_0^1 \varphi_i(t)f(t)\d t, \quad i = 1,2,\dots
\]
be the Fourier coefficients.
It is known that $f$ belongs to $\tilde{\mathcal F}(\alpha, c)$ if and
only if the sequence $\theta$ belongs to the Sobolev ellipsoid
$\Theta(\alpha, c)$, defined as
\[
\Theta(\alpha, c) = \left\{\theta:\sum_{i=1}^\infty a_i^2 \theta_i^2\leq \frac{c^2}{\pi^{2\alpha}}\right\}
\]
where
\[
a_i = \begin{cases}
i^\alpha & \text{if $i$ is even}\\
(i-1)^\alpha & \text{if $i$ is odd.}
\end{cases}
\]
To ease the analysis, we will assume $a_i = i^\alpha$ and use $\tilde
c^2$ in the place of $\frac{c^2}{\pi^{2\alpha}}$.
Expanding the observed process $Y(t)$ in terms of the same basis we obtain the Gaussian sequence
\[
X_i = \int_0^1 \varphi_i(t) \,\d Y(t) \sim N\left(\theta_i, 1/n\right).
\]
Given an estimator $\hat\theta$ for $\theta$, we can formulate a corresponding estimator for $f$ by
\[
\hat f(t) = \sum_{i = 1}^\infty \hat \theta_i \varphi_i(t),
\]
and the squared errors satisfy $\|\hat\theta - \theta\|^2 = \|\hat f - f\|^2$.
In this way, estimating the function $f$ in the white noise model is equivalent to estimating the means $\theta$ in the Gaussian sequence model. 

The minimax risk of estimating $f$ over the periodic Sobolev space is defined as
\[
R(n; \tilde{\mathcal F}(\alpha, c)) = \inf_{\hat f} \sup_{f\in\mathcal F(\alpha, c)} \E \|\hat f - f\|^2,
\]
which, as just shown, is equal to the minimax risk of estimating $\theta$ over the Sobolev ellipsoid in the corresponding Gaussian sequence model,
\[
R(n; \Theta(\alpha, \tilde c)) = \inf_{\hat \theta} \sup_{\theta \in \Theta(\alpha, \tilde c)} \E \|\hat \theta - \theta\|^2.
\]
It is known \citep{tsybakov2008introduction} that the asymptotic
minimax risk scales according to
\[
R(n;\tilde{\mathcal F}(\alpha, c)) = R(n;\Theta(\alpha, c))\asymp n^{-\frac{2\alpha}{2\alpha+1}}
\]
as $n\to\infty$.

In a distributed setting, we suppose there are $m$ machines, and the $j$th machine independently observes $Y_j(t)$ such that
\[
\d Y_j(t) = f(t)\d t + \frac{1}{\sqrt{n}} \d W_j(t) \quad 0\leq t \leq 1
\]
for $j=1,\dots, m$.
Equivalently, if we express this in terms of the Gaussian sequence
model, the $j$th machine observes data
\[
X_{ij} \sim N(\theta_i, 1/n), \quad i = 1, 2, \dots .
\]
We further assume there is a central machine where a final estimator needs to be calculated based on messages received from the $m$ local machines. 
Local machine $j$ sends a message of length $b_j$ bits to the central machine;
we denote this message by $\Pi_j$. Then
$\Pi_j = \Pi_j(X_{1j},X_{2j},\dots)$ can be viewed as a (possibly random) mapping from $\mathbb R^\infty$ to $\{1, 2,\dots, 2^{b_j}\}$.
The final estimator $\hat \theta$ is then a functional of the collection of messages.
The mechanism can be summarized by the following diagram:
\small
\[
f\longrightarrow\begin{Bmatrix}
Y_1(t)\longrightarrow X_{11},\dots, X_{n1}\overset{b_1}{\longrightarrow} \Pi_1\\
Y_2(t)\longrightarrow X_{12},\dots, X_{n2}\overset{b_2}{\longrightarrow} \Pi_2\\
\vdots\\
Y_m(t)\longrightarrow X_{1m},\dots, X_{nm}\overset{b_m}{\longrightarrow} \Pi_m\\
\end{Bmatrix}
\longrightarrow \hat\theta\longrightarrow \hat f.
\]
\normalsize
Suppose that the communication is restricted by one of two types of constraints:
An \emph{individual constraint}, where $b_j\leq b$, for each $j = 1,
\dots, m$ and a given budget $b$,
and a \emph{sum constraint}, where $\sum_{j = 1}^m b_j\leq mb$.
We call the set of mappings $\Pi_1,\dots, \Pi_m$ and $\hat\theta$ a
{\it distributed protocol}, 
and denote by $\Gamma_{\text{ind}}(m,b)$ and
$\Gamma_{\text{sum}}(m,b)$ the collection of all such protocols,
operating under the individual constraint and the sum constraint,
respectively. 

We note here that for simplicity we consider only one round of communication.
A variant is to allow multiple rounds of communication, for which
the local machines can get access to a ``blackboard'' where the central machine broadcasts
information back to the distributed nodes.

The minimax risk of the distributed estimation problem under the communication constraint is defined by
\begin{equation}
\label{eqn:distributedminimaxrisk}
\begin{aligned}
R(n,m,b;\Theta(\alpha,c))
= \inf_{(\Pi_1,\dots, \Pi_m,\hat\theta)\in\Gamma(m,b)} \sup_{\theta\in\Theta(\alpha, c)} \E\|\hat\theta(\Pi_1,\dots, \Pi_m) - \theta\|^2.
\end{aligned}
\end{equation}
Here $\Gamma$ represents either $\Gamma_{\text{ind}}$ or $\Gamma_{\text{sum}}$.
In fact, it will be clear that the minimax risks under the two types of constraints are asymptotically equivalent. 

\section{Lower bounds for distributed estimation}
\label{sec:mainresults}

In what follows, we will work in an asymptotic regime where the tuple $(n,m,b)$ goes to infinity while satisfying some relationships,
and show how the minimax risk for the distributed estimation problem scales accordingly. 
The main result can be summarized in the following theorem. 

\begin{theorem} Let $R(n,m,b;\Theta(\alpha,c))$ be defined as in \eqref{eqn:distributedminimaxrisk}
with $\Gamma = \Gamma_{\text{sum}}$
\label{thm:main}
\begin{enumerate}
\item If $b(mn)^{-\frac{1}{2\alpha+1}}\to\infty$, then
\[
\liminf_{mn\to\infty}\ (mn)^{\frac{2\alpha}{2\alpha+1}}R(n,m,b;\Theta(\alpha,c)) \geq C.
\]
\item If $b(mn)^{-\frac{1}{2\alpha+1}} = O(1)$ and $mbn^{-\frac{1}{2\alpha+1}}\to\infty$, then
\[
\liminf_{mn\to\infty}\ (mnb)^{\frac{\alpha}{\alpha+1}}R(n,m,b;\Theta(\alpha,c)) \geq C.
\]
\item If $mbn^{-\frac{1}{2\alpha+1}} = O(1)$, then
\[
\liminf_{mn\to\infty}\ (mb)^{2\alpha}R(n,m,b;\Theta(\alpha,c)) \geq C.
\]
\end{enumerate}
\end{theorem}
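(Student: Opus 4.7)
The plan is to lower-bound the minimax risk by the Bayes risk under a carefully chosen Gaussian prior, translating the communication budget into a sum-constrained information-theoretic inequality across coordinates of the Gaussian sequence. Working in the equivalent sequence model, I place a product prior $\theta_i \sim N(0, \tau_i^2)$ with $\tau_i^2 = \tau^2$ for $i \le N$ and $\tau_i^2 = 0$ otherwise, where $N$ and $\tau^2$ are chosen to nearly saturate the ellipsoid constraint $\sum_i i^{2\alpha}\tau_i^2 \lesssim \tilde c^{\,2}$. A standard concentration argument shows that the sampled $\theta$ lies in $\Theta(\alpha,\tilde c)$ with probability bounded away from zero, so the Bayes risk of the unconstrained prior lower-bounds the minimax risk up to a constant.

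For each coordinate $i$, machine $j$ observes $X_{ij} \sim N(\theta_i, 1/n)$, and the entire transcript $\Pi = (\Pi_1,\dots,\Pi_m)$ has length at most $mb$. Shannon's lower bound for Gaussian sources gives the per-coordinate MMSE bound
\[
\E(\hat\theta_i - \theta_i)^2 \;\gtrsim\; \max\!\Bigl(\min\bigl(\tau_i^2, (mn)^{-1}\bigr),\; \tau_i^2\, 2^{-2 I_i}\Bigr),
\]
where $I_i := I(\theta_i;\Pi)$. The first term is the irreducible posterior variance in the absence of any communication constraint; the second reflects the compression loss. Since the $\theta_i$ are independent under the prior, the chain rule combined with the data processing inequality yields $\sum_i I_i \le I(\theta;\Pi) \le H(\Pi) \le mb$, so the $I_i$'s play the role of a shared bit budget across coordinates.

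The three regimes then emerge from a water-filling optimization of the aggregate risk $\sum_i \E(\hat\theta_i - \theta_i)^2$ over $(I_i)$ subject to $\sum_i I_i \le mb$ and over $(N,\tau^2)$ subject to the ellipsoid constraint. In the sufficient regime $b \gg (mn)^{1/(2\alpha+1)}$, the bits suffice to drive the compression term below the noise term on every one of the $N \asymp (mn)^{1/(2\alpha+1)}$ active coordinates, yielding the classical rate $(mn)^{-2\alpha/(2\alpha+1)}$. In the insufficient regime $mb \lesssim n^{1/(2\alpha+1)}$, the bits saturate first: the optimum allocates essentially no bits beyond $N \asymp mb$ coordinates, on which the per-coordinate MMSE stays at $\tau_i^2$, producing $(mb)^{-2\alpha}$. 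The intermediate regime balances the noise and compression terms at a cutoff $N \asymp (mnb)^{1/(\alpha+1)}$, yielding $(mnb)^{-\alpha/(\alpha+1)}$.

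The principal technical obstacle is converting the aggregate mutual-information constraint $\sum_i I_i \le mb$ into a usable per-coordinate bound, given that the encoder on each machine sees all coordinates simultaneously and may adversarially concentrate bits on a few high-variance coordinates. The independence of coordinates under the Gaussian prior is what makes the chain-rule step go through, and extra care is needed to justify Shannon's rate-distortion lower bound when $\Pi_j$ is a discrete transcript rather than a Gaussian test channel, which forces a careful handling of the absolutely-continuous part of the reconstruction and of the truncation error from restricting to the first $N$ coordinates. A secondary difficulty is that the theorem is stated under the sum constraint $\sum_j b_j \le mb$ rather than the individual constraint; however, once the bound is phrased in terms of $H(\Pi)$ this distinction disappears, and the individual-constrained lower bound follows a fortiori.
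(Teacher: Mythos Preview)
Your Bayesian reduction and the per-coordinate Shannon bound $d_i \ge \tau_i^2\,2^{-2I_i}$, together with $\sum_i I_i\le I(\theta;\Pi)\le H(\Pi)\le mb$, are enough for regimes~1 and~3: the MMSE floor $(mn)^{-1}$ (which uses no bits at all) gives the sufficient-regime rate, and the rate--distortion term with $N\asymp mb$ gives $(mb)^{-2\alpha}$. The gap is regime~2. With only these two ingredients and an equal-variance prior $\tau^2\asymp N^{-(2\alpha+1)}$, the adversary's water-filling minimum is $N\max\bigl(\min(\tau^2,(mn)^{-1}),\,\tau^2\,2^{-2mb/N}\bigr)$, and optimizing over $N$ yields only $\max\bigl((mn)^{-2\alpha/(2\alpha+1)},(mb)^{-2\alpha}\bigr)$, which is strictly smaller than $(mnb)^{-\alpha/(\alpha+1)}$ throughout the interior of regime~2 (e.g.\ $\alpha=1$, $m=n$, $b=1$ gives $m^{-4/3}$ versus the claimed $m^{-1}$). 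Your suggested cutoff $N\asymp (mnb)^{1/(\alpha+1)}$ does not help: at that $N$ the compression term is at most $N\tau^2=N^{-2\alpha}=(mnb)^{-2\alpha/(\alpha+1)}$, the square of the target rate.

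The missing ingredient is the \emph{remote-source} (CEO) constraint. Machine $j$ sees only $X_{ij}\sim N(\theta_i,1/n)$, so even devoting all its bits to coordinate $i$ it can convey at most $I(\theta_i;X_{ij})=\tfrac12\log(1+n\tau_i^2)$ nats; your inequality $\sum_i I(\theta_i;\Pi)\le mb$ treats $\Pi$ as a direct $mb$-bit encoding of $\theta$ and loses this per-machine ceiling. The paper captures it through Lemma~\ref{lem:lowerboundwithprior}: the budget constraint carries, beyond the usual $\tfrac12\log(\sigma_i^2/d_i)$, a second term $\tfrac{m}{2}\log\frac{mn}{1/\sigma_i^2+mn-1/d_i}$, and it is precisely this term---with its factor $m$---that after the Lagrangian step forces $d_i\gtrsim\bigl(1/\sigma_i^2+2mnb/\ell\bigr)^{-1}$ and produces $(mnb)^{-\alpha/(\alpha+1)}$ at $\ell\asymp(mnb)^{1/(2\alpha+2)}$. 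Without an analogue of this quadratic-Gaussian CEO inequality, the proposal cannot close the intermediate regime.
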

\begin{remark}
The lower bounds are valid for both the sum constraint
and the individual constraint.
In fact, the individual constraint is more stringent than the sum constraint,
so in terms of lower bounds, it suffices to prove it for the sum constraint. 
\end{remark}
\begin{remark}
To put the result more concisely, we can write
\begin{align*}
R(n,m,b;\Theta(\alpha,c)) 
\gtrsim
\begin{cases}
(mn)^{-\frac{2\alpha}{2\alpha+1}} & \text{if }b(mn)^{-\frac{1}{2\alpha+1}}\to \infty \\
(mbn)^{-\frac{\alpha}{\alpha+1}} & 
\begin{aligned}
\text{if } b(mn)^{-\frac{1}{2\alpha+1}} =O(1)\\
\text{and }mbn^{-\frac{1}{2\alpha+1}}\to\infty
\end{aligned}\\
(mb)^{-2\alpha} & \text{if }mbn^{-\frac{1}{2\alpha+1}} = O(1)
\end{cases}.
\end{align*}
There are multiple ways to interpret this main result and here we illustrate one of the many possibilities.
Fixing $m$ and $b$, and viewing the minimax risk as a function of $n$, the sample size on each machine,
we have
\begin{align*}
R(n) \gtrsim\begin{cases}
n^{-\frac{2\alpha}{2\alpha+1}} m^{\frac{2\alpha}{2\alpha+1}} & \text{if } n\lesssim \frac{b^{2\alpha+1}}{m} \\
n^{-\frac{\alpha}{\alpha+1}} (mb)^{-\frac{\alpha}{\alpha+1}} & \text{if } \frac{b^{2\alpha+1}}{m}\ll n \ll (mb)^{2\alpha+1} \\
(mb)^{-2\alpha} & \text{if } n \gtrsim (mb)^{2\alpha+1} \\
\end{cases}.
\end{align*}
This indicates that when the configuration of machines and communication budget stay the same, 
as we increase the sample size at each machine,
the risk starts to decay at the optimal rate with exponent $-\frac{2\alpha}{2\alpha+1}$.
Once the sample size is large enough, the convergence rate slows down to an exponent $-\frac{\alpha}{\alpha+1}$.
Eventually, the sample size exceeds a threshold, 
beyond which any further increase won't decrease the risk due to the communication constraint. 
\end{remark}
\begin{remark}
This work can be viewed as a natural generalization of \citet{zhu2017quantized},
where the authors consider estimation over a Sobolev space with a single remote machine and communication constraints. 
Specifically, by setting $m=1$ we recover the main results in \citet{zhu2017quantized} up to some constant factor. 
However, with more than one machine, it is non-trivial to uncover the minimax convergence rate, 
especially in the intermediate regime. 

\end{remark}

\subsection{Proof of the lower bounds}\label{subsec:lowerbounds}
We now proceed to outline the proof of the lower bounds in Theorem
\ref{thm:main}.  Most existing results rely on Fano's lemma and the
strong data processing inequality \citep{zhang2013information,
  braverman2016communication}.  An
extension of this information-theoretic approach is used
by \citet{szabo2018adaptive} in the nonparametric
setting to obtain essentially the same lower bounds as we establish here.
However, we develop the Bayesian framework for deriving minimax lower
bounds \citep{johnstone2017gaussian}, circumventing the need for both
Fano's lemma and the strong data processing inequality, and
associating the lower bounds with the solution of an optimization
problem.

We consider a prior distribution $\pi(\theta)$ asymptotically supported on the parameter space $\Theta$.
For any estimator $\hat\theta$ that follows the distributed protocol, we have
\begin{equation}\label{eqn:supgtrintegral}
\sup_{\theta\in\Theta}\E_\theta \|\hat\theta - \theta\|^2 \gtrsim \int_{\Theta}\E_\theta\|\hat\theta - \theta\|^2 \d\pi(\theta).
\end{equation}
That is, the worst-case risk associated with $\hat\theta$ is bounded
from below by the integrated risk. 
We specifically consider the Gaussian prior distribution
$\theta_i \sim N(0, \sigma_i^2)$ for $i=1,\dots,\ell$, and $\mathbb P(\theta_i = 0) = 1$ for $i = \ell+1,\dots$,
where the sequence $\sigma_i$ satisfies $\sum_{i=1}^\ell i^{2\alpha}\sigma_i^2 \leq c^2$.
We make \eqref{eqn:supgtrintegral} clear in the following lemma, whose proof can be found in the supplementary material. 
\begin{lemma}\label{lem:supgtrintegral}
Suppose that a sequence of Gaussian prior distributions for $\theta$ and estimator $\hat\theta$ satisfy
\begin{equation}\label{eqn:condition}
\frac{\sum_{i=1}^\ell i^{2\alpha}\sigma_i^2}{\max_{1\leq i\leq \ell}i^{2\alpha}\sigma_i^2} = O(\ell)
\text{ and }
\int_{\Theta}\E_\theta[\|\hat\theta - \theta\|^2] \d\pi(\theta) = O(\ell^{\delta})
\end{equation}
for some $\delta>0$ as $\ell\to\infty$. Then
\[
\sup_{\theta\in\Theta}\E_\theta \|\hat\theta - \theta\|^2 \geq \int_{\Theta}\E_\theta[\|\hat\theta - \theta\|^2] \d\pi(\theta)\cdot (1+o(1)).
\]
\end{lemma}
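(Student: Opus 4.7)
The plan is to apply the standard ``supremum dominates average'' reduction, but with care because the Gaussian prior $\pi$ is not exactly supported on the Sobolev ellipsoid $\Theta = \Theta(\alpha,\tilde c)$. First I would record the elementary bound
\[
\sup_{\theta \in \Theta} \E_\theta \|\hat\theta - \theta\|^2 \;\geq\; \frac{1}{\pi(\Theta)} \int_\Theta \E_\theta \|\hat\theta - \theta\|^2 \,\d\pi(\theta),
\]
which holds because the left side is at least the expected risk under the conditional measure $\pi(\,\cdot\,|\,\Theta)$. The conclusion of the lemma will then follow immediately once we establish that $\pi(\Theta) = 1 - o(1)$, since that makes $1/\pi(\Theta) = 1 + o(1)$.

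The bulk of the work is therefore to show $\pi(\Theta) \to 1$. The natural quantity to control is $S = \sum_{i=1}^\ell i^{2\alpha}\theta_i^2$, which under the product-Gaussian prior is a weighted sum of independent $\chi^2_1$ variables with
\[
\E[S] \;=\; \sum_{i=1}^\ell i^{2\alpha}\sigma_i^2 \;\leq\; \tilde c^2, \qquad \mathrm{Var}(S) \;=\; 2\sum_{i=1}^\ell (i^{2\alpha}\sigma_i^2)^2 \;\leq\; 2\bigl(\max_{1\leq i\leq \ell} i^{2\alpha}\sigma_i^2\bigr)\sum_{i=1}^\ell i^{2\alpha}\sigma_i^2.
\]
Condition (i), in the regime of interest where the ratio $\sum_i i^{2\alpha}\sigma_i^2 / \max_i i^{2\alpha}\sigma_i^2$ is actually of order $\ell$, forces $\max_i (i^{2\alpha}\sigma_i^2) = O(\tilde c^2/\ell)$ and hence $\mathrm{Var}(S) = O(\tilde c^4/\ell) \to 0$. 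Provided that the prior's variances are chosen so that $\E[S]$ is bounded strictly away from $\tilde c^2$ (which can be arranged by rescaling the $\sigma_i^2$ by a fixed factor $1-\epsilon$, an operation that changes the integrated risk only by a constant), Chebyshev's inequality yields
\[
\pi(\Theta^c) \;=\; \pi(S > \tilde c^2) \;\leq\; \frac{\mathrm{Var}(S)}{(\tilde c^2 - \E[S])^2} \;=\; O(1/\ell),
\]
so $\pi(\Theta) = 1 - O(1/\ell)$ and the reduction $\sup \geq R_\Theta/\pi(\Theta) = R_\Theta(1+o(1))$ is complete. Condition (ii) enters indirectly: the polynomial bound $R_\Theta = O(\ell^\delta)$ ensures that the integrated risk is of a controlled order, so that the multiplicative $(1+o(1))$ statement is meaningful and robust against the negligible contribution from the tail $\Theta^c$.

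The main obstacle I anticipate is the concentration step when the prior saturates the ellipsoid, i.e.\ $\E[S] = \tilde c^2$: Chebyshev then yields no useful bound, and by a CLT heuristic $\pi(\Theta)$ tends to $1/2$ rather than $1$. The clean way around this is the $(1-\epsilon)$ rescaling described above, after which the Chebyshev argument goes through verbatim; the loss of a factor $(1-\epsilon)^2$ in $R_\Theta$ is absorbed by a subsequent limit $\epsilon\downarrow 0$ in the downstream applications of the lemma. Sharper (sub-exponential) tail bounds for the weighted $\chi^2$ sum $S$ are available if a stronger rate than $O(1/\ell)$ is ever needed, but Chebyshev suffices for the $(1+o(1))$ qualitative conclusion.
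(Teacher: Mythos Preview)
There is a genuine gap, and it stems from a misreading of the quantity in the lemma. The paper's notation $\int_\Theta \E_\theta[\|\hat\theta-\theta\|^2]\,\d\pi(\theta)$ is admittedly sloppy, but from the proof (where it is identified with $\lim_{\tau\to 0}I_\tau$, an integral over all of $\mathbb R^\ell$) and from how the lemma is later combined with Lemma~\ref{lem:lowerboundwithprior}, the intended object is the \emph{full} Bayes risk $R_{\text{full}}=\int_{\mathbb R^\ell}\E_\theta\|\hat\theta-\theta\|^2\,\d\pi(\theta)$, not the integral restricted to the ellipsoid. Under your literal reading the result is vacuous: since $\pi(\Theta)\le 1$, your own first display already gives $\sup_\Theta \ge \int_\Theta(\cdots)\,\d\pi/\pi(\Theta)\ge \int_\Theta(\cdots)\,\d\pi$ with no concentration needed at all. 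The real content of the lemma is that the sup dominates $R_{\text{full}}$, and for that you must control the tail piece $r=\int_{\Theta^c}\E_\theta\|\hat\theta-\theta\|^2\,\d\pi(\theta)$, which your argument never touches beyond a parenthetical remark.

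This is where your claim ``Chebyshev suffices'' fails. After the $(1-\epsilon)$ shrinkage, Chebyshev indeed gives $\pi_\epsilon(\Theta^c)=O(1/\ell)$; but to bound $r$ you still need the paper's step $r\lesssim B_\ell^2\sqrt{\pi_\epsilon(\Theta^c)}$ (via Cauchy--Schwarz and the fourth-moment bound on $\|\theta\|$), which then yields only $r=O(\ell^{-1/2})$. In the downstream applications the Bayes risk scales like $\ell^{-2\alpha}$, so $r/R_{\text{full}}\asymp \ell^{2\alpha-1/2}$ \emph{diverges} for every $\alpha>1/4$. The paper avoids this by invoking a sub-exponential tail bound for weighted $\chi^2$ sums (Lemma~\ref{chisqineq}), obtaining $\pi_\tau(\Theta^c)\le \exp(-c\,\ell)$; it is precisely this exponential decay, set against the merely polynomial size of $R_{\text{full}}$ guaranteed by condition~\eqref{eqn:condition}, that makes $r=o(R_{\text{full}})$. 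So the sub-exponential bound you dismiss as optional is in fact the crux of the argument, and the role you assign to condition~(ii) (``enters indirectly'') understates it: that polynomial hypothesis is exactly what the exponential tail must beat.
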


The next step is to lower bound the integrated risk $\int_{\Theta}\E_\theta[\|\hat\theta - \theta\|^2] \d\pi(\theta)$.
Lemma \ref{lem:lowerboundwithprior} is derived from a result that appears in \citep{wang2010sum};
for completeness we include the proof in the supplementary material. 
\begin{lemma}\label{lem:lowerboundwithprior}
Suppose $\theta_i\sim N(0,\sigma_i^2)$ and $X_{ij}\sim N(\theta_i, \varepsilon^2)$ for $i=1,\dots, \ell$ and $j=1,\dots, m$.
Let $\Pi_j: \mathbb R^\ell\to\{1,\dots, M_j\}$ be a (random) mapping, which takes up to $M_j$ different values. 
Let $\hat\theta: \{1,\dots, M_1\}\times\cdots\times\{1,\dots, M_m\}\to \mathbb R^\ell$ be an estimator 
based on the messages created by $\Pi_1,\dots, \Pi_m$. 
Under the constraint that $\frac{1}{m}\sum_{j=1}^m \log M_j \leq b$, 
$\E\|\hat\theta - \theta\|^2$ can be lower bounded by the value of the following optimization problem
\begin{equation}\label{eqn:optimization}
\begin{aligned}
L(m,b,\varepsilon;\sigma) \triangleq
\min_{d_i,\ i = 1,\dots,\ell} &\ \sum_{i=1}^\ell d_i\\
\mathrm{s.t.} &\ \sum_{i=1}^\ell\left(\frac{1}{2}\log\frac{\sigma_i^2}{d_i} + \frac{m}{2}\log\frac{\frac{m}{\varepsilon^2}}{\frac{1}{\sigma_i^2} + \frac{m}{\varepsilon^2}-\frac{1}{d_i}}\right)\leq mb\\
&\ \frac{\sigma_i^2\frac{\varepsilon^2}{m}}{\sigma_i^2+\frac{\varepsilon^2}{m}} \leq d_i \leq \sigma_i^2 \mathrm{\ for\ } i = 1, \dots, \ell.
\end{aligned}
\end{equation}
\end{lemma}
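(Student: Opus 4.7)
The plan is to cast the proof as a sum-rate converse for the quadratic Gaussian CEO problem applied coordinate by coordinate. First I would introduce the per-coordinate distortion $d_i = \E[(\hat\theta_i - \theta_i)^2]$, so that $\E\|\hat\theta - \theta\|^2 = \sum_{i=1}^\ell d_i$ and the lemma reduces to showing that the $d_i$'s are feasible for the optimization $L(m,b,\varepsilon;\sigma)$.

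The box constraints are the easy part. The inequality $d_i \leq \sigma_i^2$ can be assumed without loss of generality, since if any estimator has coordinate MSE exceeding $\sigma_i^2$ then replacing $\hat\theta_i$ by $0$ only decreases the risk. For the other side, the ideal centralized posterior mean $\E[\theta_i \mid X_{i1},\dots,X_{im}]$ achieves Bayes risk $\sigma_i^2(\varepsilon^2/m)/(\sigma_i^2 + \varepsilon^2/m)$, and since any distributed protocol merely coarsens the observations, its per-coordinate MSE cannot fall below this value.

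The log-sum rate constraint is the main work, and it couples two information-theoretic inputs. On the one hand, the global communication budget yields
\[
\sum_{i=1}^\ell \sum_{j=1}^m I(\Pi_j; X_{ij}) \;\leq\; \sum_{j=1}^m I(\Pi_j; X_{\cdot j}) \;\leq\; \sum_{j=1}^m H(\Pi_j) \;\leq\; mb,
\]
where the first inequality uses independence of the $X_{ij}$ across $i$ (so mutual information with the whole coordinate vector upper bounds the sum over coordinates) and the remaining steps are standard. On the other hand, for each fixed $i$ the Gaussian CEO sum-rate converse (Oohama; Wang--Viswanath, as cited) applied to the scalar source $\theta_i \sim N(0,\sigma_i^2)$ with noisy observations $X_{ij} = \theta_i + N(0,\varepsilon^2)$ at the $m$ encoders yields
\[
\sum_{j=1}^m I(\Pi_j; X_{ij}) \;\geq\; \frac{1}{2}\log\frac{\sigma_i^2}{d_i} + \frac{m}{2}\log\frac{m/\varepsilon^2}{1/\sigma_i^2 + m/\varepsilon^2 - 1/d_i}.
\]
Summing over $i$ and chaining with the budget inequality above delivers the constraint in the optimization.

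The main obstacle is justifying this per-coordinate CEO converse when $\Pi_j$ is a function of the entire vector $X_{\cdot j}$ rather than just $X_{ij}$. The resolution is that the remaining components $\{X_{i'j}\}_{i'\neq i}$ are independent of the pair $(\theta_i, X_{ij})$, so from the perspective of the coordinate-$i$ subproblem they act as independent local randomization available to encoder $j$; the standard converse then applies verbatim with the effective encoder rate measured by $I(\Pi_j; X_{ij})$. A final remark is that one may take the coordinate-$i$ decoder to be $\E[\theta_i \mid \Pi_1,\dots,\Pi_m]$, whose MSE is at most $d_i$, so that $d_i$ is indeed what the converse actually constrains.
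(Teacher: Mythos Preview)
Your proposal is correct and leads to the same constraint, but the route differs from the paper's in two ways that are worth noting.

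First, the order of decomposition is reversed. The paper keeps all coordinates together and starts from
\[
mb \;\geq\; I(\theta,X;W) \;=\; I(\theta;W) + \sum_{j=1}^m I(X_j;W_j\mid\theta),
\]
then peels off coordinates inside each term via chain rule and the Gaussian maximum–entropy bound, obtaining $\sum_i\bigl(\tfrac12\log\tfrac{\sigma_i^2}{d_i} + \sum_j \tfrac12\log\tfrac{\varepsilon^2}{d_{ij}}\bigr)$ with $d_{ij}$ the conditional MMSE of $X_{ij}$ given $(\theta,W_j)$. It then \emph{derives from scratch} the coupling $\sum_j d_{ij}\le \varepsilon^4\bigl(\tfrac{1}{\sigma_i^2}+\tfrac{m}{\varepsilon^2}-\tfrac{1}{d_i}\bigr)$ by an explicit second–moment computation: it writes $Y_i=\E[\theta_i\mid X]$, finds the covariance of $(Y_i,\theta_i,\E[\theta_i\mid W])$, and compares the best linear predictor of $Y_i$ from $(\theta_i,\E[\theta_i\mid W])$ with the conditional mean $\E[Y_i\mid\theta,W]$. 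A final Jensen step collapses $\sum_j\log\tfrac{\varepsilon^2}{d_{ij}}$ into the $\tfrac{m}{2}\log(\cdot)$ term. You instead split by coordinate first via $\sum_i I(\Pi_j;X_{ij})\le I(\Pi_j;X_{\cdot j})$ and then invoke the scalar CEO converse as a black box per coordinate.

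Second, and relatedly, your black-box step hides exactly the computation the paper carries out. For your chaining to go through you implicitly need
\[
\sum_{j} I(\Pi_j;X_{ij}) \;\geq\; I(\Pi;\theta_i) + \sum_j I(\Pi_j;X_{ij}\mid\theta_i),
\]
since the standard CEO converse actually lower-bounds the right-hand side. This does hold here, because the $\Pi_j$'s are conditionally independent given $\theta_i$ (the ``extra randomness'' $\{X_{i'j}\}_{i'\neq i}$ is independent of $\theta_i$), whence $\sum_j H(\Pi_j)\ge H(\Pi)$ gives $\sum_j I(\Pi_j;\theta_i)\ge I(\Pi;\theta_i)$; together with the Markov chain $\theta_i\to X_{ij}\to \Pi_j$ this yields the display above. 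You gesture at this with the ``local randomization'' remark, but it is the one nontrivial step in your reduction and would be worth making explicit. The payoff of your route is modularity; the payoff of the paper's is that it is entirely self-contained and makes the role of the auxiliary distortions $d_{ij}$ transparent.
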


Combining the Lemma \ref{lem:supgtrintegral} and \ref{lem:lowerboundwithprior},
we have the following asymptotic lower bound
\[
R(m,b,n;\Theta(\alpha, c)) \gtrsim L(m,b,n^{-\frac{1}{2}};\sigma)
\]
for sequences $\sigma_i$ satisfying $\sum_{i=1}^\ell i^{2\alpha}\sigma_i^2\leq \tilde c^2$ and 
$\frac{\sum_{i=1}^\ell i^{2\alpha}\sigma_i^2}{\max_{1\leq i\leq \ell}i^{2\alpha}\sigma_i^2} = O(\ell)$
as $\ell\to\infty$.

Next, based on the optimization problem formulated above, 
we work under three different regimes,
and derive three forms of lower bounds of the minimax risk.
The key is to choose appropriate sequences of prior variances $\sigma_i^2$ for different regimes,
as we shall illustrate. 

\begin{enumerate}
\item Suppose that $d_1,\dots, d_\ell$ is a feasible solution to the problem \eqref{eqn:optimization}. 
Using the first constraint, we have
\begin{align*}
mb & \geq \sum_{i=1}^\ell\left(\frac{1}{2}\log\frac{\sigma_i^2}{d_i} + \frac{m}{2}\log\frac{\frac{m}{\varepsilon^2}}{\frac{1}{\sigma_i^2} + \frac{m}{\varepsilon^2}-\frac{1}{d_i}}\right) \\
& \geq \sum_{i=1}^\ell\frac{1}{2}\log\frac{\sigma_i^2}{d_i}\\
& \geq \sum_{i=1}^\ell\frac{1}{2}\log\sigma_i^2 - \frac{\ell}{2}\log \left(\frac{1}{\ell}\sum_{i=1}^\ell d_i\right)
\end{align*}
where we have used Jensen's inequality. 
Therefore,
\begin{align*}
\sum_{i=1}^\ell d_i &\geq \ell\exp\left( \frac{1}{\ell}\sum_{i=1}^\ell\log\sigma_i^2 - \frac{2mb}{\ell} \right)\\
&= \ell\left(\prod_{i=1}^\ell \sigma_i^2\right)^{\frac{1}{\ell}}e^{-\frac{2mb}{\ell}}.
\end{align*}

Consider an asymptotic regime where $mbn^{-\frac{1}{2\alpha+1}} = O(1)$,
and pick a sequence of corresponding prior distributions with $\ell = \gamma mb$ for some constant $\gamma$
and $\sigma_i^2 = \frac{\tilde c^2}{i^{2\alpha}\ell}$ for $i = 1,\dots, \ell$.
Note that this choice satisfies condition \eqref{eqn:condition}.
With such a choice of the prior distribution, we have
\begin{align*}
\sum_{i=1}^\ell d_i & \geq \ell\left(\prod_{i=1}^\ell \frac{\tilde c^2}{i^{2\alpha}\ell}\right)^{\frac{1}{\ell}}e^{-\frac{2mb}{\ell}}\\
& = \tilde c^2e^{-\frac{2mb}{\ell}}(\ell!)^{-\frac{2\alpha}{\ell}}\\
& \geq \tilde c^2e^{-\frac{2mb}{\ell}} (e\ell^{\ell+\frac{1}{2}}e^{\ell})^{-\frac{2\alpha}{\ell}} \\
& \geq \tilde c^2e^{-4\alpha}\cdot e^{-\frac{2mb}{\ell}}\ell^{-2\alpha - \frac{\alpha}{\ell}}\\
& \sim (mb)^{-2\alpha}.
\end{align*}

\item Again suppose that $d_1,\dots, d_\ell$ is a feasible solution to the problem \eqref{eqn:optimization}. 
This time we take another viewpoint of the first constraint
\begin{align*}
mb  &\geq \sum_{i=1}^\ell\left(\frac{1}{2}\log\frac{\sigma_i^2}{d_i} + \frac{m}{2}\log\frac{\frac{m}{\varepsilon^2}}{\frac{1}{\sigma_i^2} + \frac{m}{\varepsilon^2}-\frac{1}{d_i}}\right) \\
&\geq \sum_{i=1}^\ell\frac{m}{2}\log\frac{\frac{m}{\varepsilon^2}}{\frac{1}{\sigma_i^2} + \frac{m}{\varepsilon^2}-\frac{1}{d_i}}.
\end{align*}
To minimize $\sum_{i=1}^\ell d_i$ under the constraint that 
$\sum_{i=1}^\ell\frac{1}{2}\log\frac{\frac{m}{\varepsilon^2}}{\frac{1}{\sigma_i^2} + \frac{m}{\varepsilon^2}-\frac{1}{d_i}}\leq b$,
we write the Lagrangian
\[
L = \sum_{i=1}^\ell d_i + \lambda\left(\sum_{i=1}^\ell\frac{1}{2}\log\frac{\frac{m}{\varepsilon^2}}{\frac{1}{\sigma_i^2} + \frac{m}{\varepsilon^2}-\frac{1}{d_i}} - b\right)
\]
and set
\[
0 = \frac{\partial L}{\partial d_i} = 1 - \frac{\lambda}{2}\frac{1}{\frac{1}{\sigma_i^2} + \frac{m}{\varepsilon^2} - \frac{1}{d_i}}.
\]
Solving this gives us that
\begin{align*}
\sum_{i = 1}^\ell d_i & \geq \sum_{i=1}^\ell \frac{1}{\frac{1}{\sigma_i^2} + \frac{m}{\varepsilon^2} \left(1-e^{-\frac{2b}{\ell}}\right)}\\
& \geq \sum_{i=1}^\ell \frac{1}{\frac{1}{\sigma_i^2} + \frac{2mb}{\varepsilon^2\ell} }.
\end{align*}
This time, consider a regime where $b(mn)^{-\frac{1}{2\alpha+1}} =O(1)$
and $mbn^{-\frac{1}{2\alpha+1}}\to\infty$.
Pick a sequence of corresponding prior distributions with $\ell = (\gamma mbn)^{\frac{1}{2\alpha+2}}$ for some constant $\gamma$
and $\sigma_i^2 = \frac{\tilde c^2}{\sum_{i=1}^\ell i^{2\alpha}}$ for $i = 1,\dots, \ell$,
which satisfies condition \eqref{eqn:condition}.
With this choice and replacing $\varepsilon^2$ by $\frac{1}{n}$, we have
\begin{align*}
\sum_{i = 1}^\ell d_i & \geq \sum_{i=1}^\ell \frac{1}{\frac{\sum_{i=1}^\ell i^{2\alpha}}{\tilde c^2} + \frac{2mbn}{\ell} }\\
& \geq \frac{\ell}{\frac{(\ell+1)^{2\alpha+1}}{\tilde c^2(2\alpha+1)} + \frac{2mbn}{\ell} }\\
& = \frac{\tilde c^2(2\alpha+1)}{2\tilde c^2(2\alpha+1)+1}\ell^{-2\alpha}(1+o(1))\\
&\sim (mbn)^{-\frac{\alpha}{\alpha+1}}.
\end{align*}

\item For the last regime where $b(mn)^{-\frac{1}{2\alpha+1}}\to \infty$,
we use the constraint that $d_i\geq\frac{\sigma_i^2\frac{\varepsilon^2}{m}}{\sigma_i^2+\frac{\varepsilon^2}{m}}$
and write
\[
\sum_{i=1}^\ell d_i \geq \sum_{i=1}^\ell \frac{\sigma_i^2\frac{\varepsilon^2}{m}}{\sigma_i^2+\frac{\varepsilon^2}{m}}
= \sum_{i=1}^\ell \frac{\sigma_i^2\frac{1}{mn}}{\sigma_i^2+\frac{1}{mn}}.
\]
Let $\ell = (\gamma mn)^{\frac{1}{2\alpha+1}}$ and $\sigma_i^2 = \frac{\tilde c^2}{\sum_{i=1}^\ell i^{2\alpha}}$
satisfying \eqref{eqn:condition},
and we have
\begin{align*}
\sum_{i=1}^\ell d_i & \geq \sum_{i=1}^\ell \frac{\frac{\tilde c^2}{\sum_{i=1}^\ell i^{2\alpha}}\frac{1}{mn}}{\frac{\tilde c^2}{\sum_{i=1}^\ell i^{2\alpha}}+\frac{1}{mn}}\\
& \geq \sum_{i=1}^\ell \frac{\frac{\ell^{2\alpha+1}}{2\alpha+1}\frac{1}{mn}}{\frac{\ell^{2\alpha+1}}{2\alpha+1}+\frac{1}{mn}}\\
& \sim (mn)^{-\frac{2\alpha}{2\alpha+1}}.
\end{align*}
Thus, combining the previous three scenarios, we conclude the lower bound in \ref{thm:main}.

\end{enumerate}

\section{Achievability}\label{sec:achievability}

In this section, we describe how the lower bound can be achieved
through the use of a certain distributed protocol. 
Unlike for the lower bound, we shall work under the individual
constraint on the communication budget, instead of the sum constraint. 
However, a protocol satisfying the individual constraint automatically satisfies the sum constraint. 

\subsection{High-level idea}

In nonparametric estimation theory, it is known that for the Gaussian sequence model
$X_i\sim N\left(\theta, \frac{1}{n}\right)$ for $i = 1,\dots,\infty$ with $\theta\in\Theta(\alpha, c)$,
the optimal scaling of the $\ell_2$ risk is
$n^{-\frac{2\alpha}{2\alpha+1}}$, and this can be achieved by
truncating the sequence at $i = O(n^{\frac{1}{2\alpha+1}})$. That is, the estimator
\[
\hat\theta = \begin{cases}
X_i & \text{if } i \leq n^{\frac{1}{2\alpha+1}}\\
0 & \text{if } i > n^{\frac{1}{2\alpha+1}}\\
\end{cases}
\]
has worst-case risk $\sup_{\theta\in\Theta(\alpha, c)}\E_\theta \|\hat\theta - \theta\|^2 \asymp n^{-\frac{2\alpha}{2\alpha+1}}$.
We are going to build on this simple but rate-optimal estimator in our distributed protocol.
But before carefully defining and analyzing the protocol,
we first give a high-level idea of how it is designed. 

In our distributed setting, we have a total budget of $mb$ bits to
communicate from the local machines to the central
machine, which means that we can transmit $O(mb)$ random variables to a certain degree of precision. 

In the first regime where we have $mb \lesssim n^{\frac{1}{2\alpha+1}}$, 
the communication budget is so small that 
the total number of bits is smaller than the effective dimension for the noise level $1/n$.
In this case, we let each machine transmit information regarding a unique set of $O(b)$ components of $\theta$.
Thus, at the central machine, we can decode and obtain information about the first $O(mb)$ components of $\theta_i$.
This is equivalent to truncating a centralized Gaussian sequence at $i = O(mb)$, and gives us a convergence rate of $(mb)^{-2\alpha}$.

In the second regime ($b \ll (mn)^{\frac{1}{2\alpha+1}}$ and $mb\gg n^{\frac{1}{2\alpha+1}}$), 
we have a larger budget at our disposal,
and can thus afford to transmit more than one random variable
containing information about $\theta_i$. 
Suppose that for a specific $i$ we quantize and transmit $X_{ij}$ for $k$ different values of $j$, namely at $k$ different machines.
The budget of $O(mb)$ random variables will allow us to acquire
information about the first $O(\frac{mb}{k})$ components of $\theta$.
When aggregating at the central machine, 
we have $Z_{i}\sim N\left(\theta_i, \frac{1}{nk}\right)$ for $i = 1,\dots, O\left(\frac{mb}{k}\right)$,
and no information about $\theta_i$ for $i\geq O\left(\frac{mb}{k}\right)$.
Now consider the effect of choosing different values of $k$.
In choosing a smaller $k$, we will be able to estimate more components of $\theta$,
but each at a lower accuracy. 
On the other hand, a larger $k$ leads to fewer components being estimated, but with smaller error. 
We know from nonparametric estimation theory that the tradeoff is optimized when
$(kn)^{\frac{1}{2\alpha+1}} \asymp \frac{mb}{k}$.
This gives us the optimal choice $k\asymp (mb)^{\frac{2\alpha+1}{2\alpha+2}}n^{-\frac{1}{2\alpha+2}}$,
with risk scaling as $(mbn)^{-\frac{\alpha}{\alpha+1}}$.

In the last regime, we have $b\gtrsim (mn)^{\frac{1}{2\alpha+1}}$.
In this case, the number of bits available at each machine is larger than the effective dimension associated with the global noise level $\frac{1}{mn}$.
We simply quantize and transmit the first
$O(mn)^{\frac{1}{2\alpha+1}}$ of $X_{ij}$ from each machine to the
central machine, where we decode and simply average the received random variables.

\subsection{Algorithm}

\begin{figure*}[th]
\begin{center}
\input{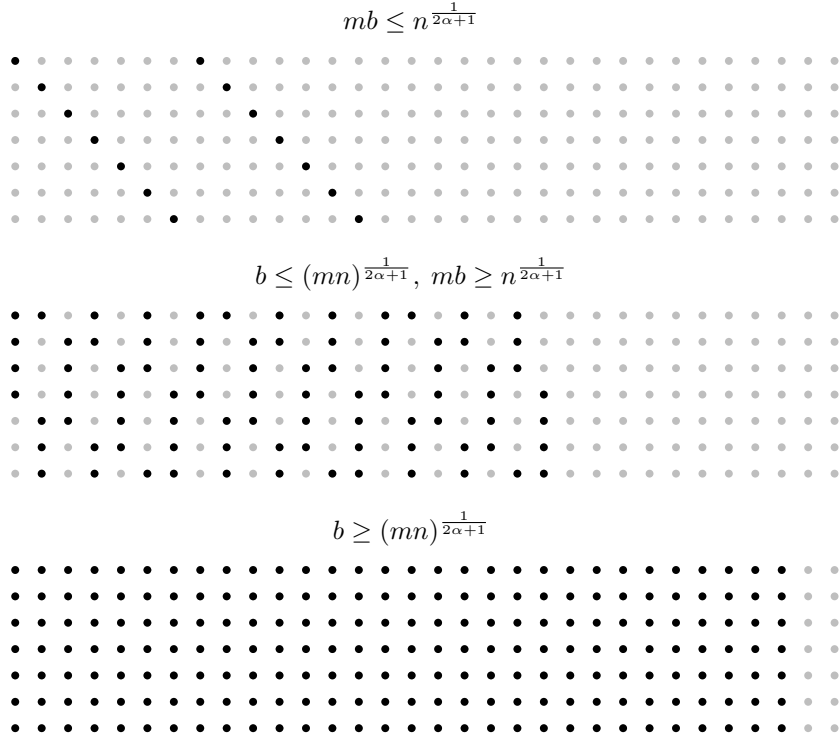}
\caption{Allocation of communication budget for the three regimes. 
Each dot represents a random variable $X_{ij}$. 
The $j$th row represents the random variables on the $j$th machine,
and the $i$th dot in that row is for the random variable $X_{ij}\sim N(\theta_i,\frac{1}{n})$.
If a dot is colored black, it means that the random variable is
quantized and transmitted to the central machine;
otherwise, we don't spend any communication budget on it. 
In the first regime, each $\theta_i$ is only estimated on at most one machine,
while in the second regime, it is estimated on multiple but not all machines.
In the last regime, we quantize and transmit all random variables associated with $\theta_i$ on the $m$ machines,
before truncating at some position. 
}
\label{fig:algorithm}
\end{center}
\end{figure*}

First we state a lemma describing and analyzing a simple scalar quantization method. 
\begin{lemma}\label{lem:randomquantization}
Suppose that $X$ is a random variable supported on $[-c,c]$,
and that $U\sim \mathrm{Unif}(0,\delta)$ independently, for some constant $\delta>0$.
Let $G(u,\delta) = \{u+i\delta: i = 0, \pm 1, \pm 2, \dots\}$ be a grid of points with base point $u$ and skip $\delta$. 
Define 
\[
q(x;u,\delta) = \argmin_{g\in G(u,\delta)}|x-g|.
\]
Let $E = q(X; U,\delta)-X$. Then $X$ and $E$ are independent, and $E\sim\mathrm{Unif}(-\frac{\delta}{2}, \frac{\delta}{2})$.
\end{lemma}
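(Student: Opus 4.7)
The plan is to use the translation invariance of the grid to reduce to a standard rounding problem, then exploit the periodicity of the rounding error. The key identity is that the shifted grid is a shift of the unshifted grid: $G(u,\delta) = u + G(0,\delta)$, which implies
\[
q(x;u,\delta) - x \;=\; q(x-u;0,\delta) - (x-u).
\]
Define $f(y) = q(y;0,\delta) - y = \delta\cdot\mathrm{round}(y/\delta) - y$. Then $f$ is $\delta$-periodic and satisfies $f(y) = -y$ for $y\in[-\delta/2,\delta/2]$, so $f$ maps a single fundamental period bijectively (up to a measure-zero boundary) onto $[-\delta/2,\delta/2]$.

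First I would fix $X=x$ and condition on this event. Under the conditional distribution, $Y := x - U$ is uniform on the interval $(x-\delta,\,x]$, which has length exactly $\delta$. Since $f$ is $\delta$-periodic, the pushforward of the uniform distribution on any interval of length $\delta$ under $f$ equals the pushforward of $\mathrm{Unif}(0,\delta)$ under $f$. A direct calculation (split $Y'\sim\mathrm{Unif}(0,\delta)$ into $Y'\in[0,\delta/2]$ giving $f(Y')=-Y'\in[-\delta/2,0]$ and $Y'\in(\delta/2,\delta)$ giving $f(Y')=\delta-Y'\in[0,\delta/2)$) shows that $f(Y')\sim\mathrm{Unif}(-\delta/2,\delta/2)$. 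Hence $E = f(Y) \mid X=x$ is uniform on $(-\delta/2,\delta/2)$.

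Since this conditional distribution does not depend on $x$, the error $E$ is independent of $X$ and has the claimed marginal, which is exactly both conclusions of the lemma. The support assumption $X\in[-c,c]$ plays no role in the argument itself; it is present only because the statement is applied later to bounded inputs where $c$ governs the number of grid points actually needed to be encoded.

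No step here should be a serious obstacle; the only mild care is handling the boundary set $\{y : y/\delta \in \mathbb{Z} + 1/2\}$ where the nearest grid point is non-unique, but this is a null set under the continuous law of $U$ (for any fixed $x$), so the tie-breaking convention in $\argmin$ is irrelevant for distributional claims.
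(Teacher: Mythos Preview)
Your proof is correct and follows essentially the same approach as the paper: condition on $X=x$, show that the conditional law of $E$ is $\mathrm{Unif}(-\delta/2,\delta/2)$ regardless of $x$, and conclude independence. The only difference is executional---the paper computes the conditional density directly via $\mathbb P(E\in(\epsilon,\epsilon+\d\epsilon)\mid X=x)=\d\epsilon/\delta$, whereas you route the same computation through the translation identity $q(x;u,\delta)-x=f(x-u)$ and the $\delta$-periodicity of $f$; both arrive at the same conditional-uniform conclusion.
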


\begin{proof}
Let us condition on the event that $X=x$. 
We have for $\epsilon\in(-\frac{\delta}{2}, \frac{\delta}{2})$
\begin{align*}
&\mathbb P(E \in (\epsilon, \epsilon+\d \epsilon)\,|\,X = x) \\
& = \mathbb P(q(X)\in (x+\epsilon,x+\epsilon+\d \epsilon)\,|\, X = x) \\
& = \mathbb P(U\in (x+\epsilon - \delta \lfloor (x+\epsilon)/\delta \rfloor, \\
&\quad\quad\quad x+\epsilon - \delta \lfloor (x+\epsilon)/\delta \rfloor +\d \epsilon)\,|\, X = x) \\
& = \mathbb P(U\in (x+\epsilon - \delta \lfloor (x+\epsilon)/\delta \rfloor,\\
&\quad\quad\quad x+\epsilon - \delta \lfloor (x+\epsilon)/\delta \rfloor +\d \epsilon)) \\
& = \frac{d\epsilon}{\delta}.
\end{align*}
We thus conclude that $E|X\sim\mathrm{Unif}(-\frac{\delta}{2}, \frac{\delta}{2})$, 
and therefore $E$ and $X$ are independent. 
\end{proof}
By this lemma, we know that with a public key for randomness, 
we can transmit a random variable $X$ supported on $(-c, c)$ using $\log_2\frac{2c}{\delta}$ bits,
so that the central machine receives $X+E$ with $E\sim\mathrm{Unif}(-\frac{\delta}{2}, \frac{\delta}{2})$ and independent to $X$.
We are now ready to describe the algorithm of estimating $\theta$. 

\noindent\textbf{Algorithm}
\vspace{-1em}
\begin{enumerate}
\item Input
\begin{itemize}
\item $\alpha$: order of the Sobolev space. 
\item $c$: radius of the Sobolev space.
\item $X_{ij}$: independent $N\left(\theta_i,\frac{1}{n}\right)$ r.v.\ for $i = 1, \dots, \infty$ at machine $j$ for $j = 1, \dots, m$.
\item $b$: number of bits for communication at each machine.
\end{itemize}
Calculate 
\begin{itemize}
\item $\delta = \max\left\{(mb)^{-\frac{2\alpha+1}{2}}, n^{-\frac{1}{2}}\right\}$.
\item $b_0 = \log_2\delta$, $\tilde b = \lfloor b/b_0\rfloor$.
\item $k = \left(\lfloor (mb)^{\frac{2\alpha+1}{2\alpha+2}}n^{-\frac{1}{2\alpha+2}}\rfloor \vee 1\right) \wedge m$.
\end{itemize}
\item At the $j$th machine (for $j = 1, \dots, m$), let $I_j = \left\{\lceil (ms+j)/k\rceil: s = 0, \dots, \tilde b-1\right\}$.
\begin{enumerate}
\item Generate a random seed shared with the central machine. 
\item For $i\in I_j$, generate $U_{ij}\sim\mathrm{Unif}(0,\delta)$  independently based on the seed. 
\item For $i\in I_j$, winsorize $X_{ij}$ at $[-c, c]$ and quantize 
\[
\tilde X_{ij} = q\left((X_{ij}\wedge c)\vee(-c); U_{ij}, \delta\right).
\] 
\item Transmit the quantized random variables $\left\{\tilde X_{ij}: i\in I_j\right\}$ to the central machine using $\lfloor b/b_0 \rfloor b_0\leq b$ bits. 
\end{enumerate}
\item At the central machine, decode the messages and construct the estimator
\begin{align*}
\hat\theta_i =\begin{cases}
\frac{1}{k}\sum_{j:\,i\in I_j}\tilde X_{ij} &\text{if } i \leq \lfloor m\tilde b/k\rfloor \wedge (mn)^{\frac{1}{2\alpha+1}}\\
0 &\text{otherwise}
\end{cases}.
\end{align*}
\end{enumerate}

A graphical illustration of the algorithm is shown in Figure \ref{fig:algorithm}.
We must also note that while the algorithm is rate optimal, it is not adaptive,
in the sense that it requires knowledge of the parameter $\alpha$.

\subsection{Analysis}
We now analyze the statistical risk associated with the algorithm described in the previous section. 
Suppose that $\theta\in\Theta(\alpha, c)$. 
First notice that the winsorization in Step 2(c) makes $X_{ij}$ bounded prior to quantization
and it only decreases the risk. 
Write $i^* = \lfloor m\tilde b/k\rfloor \wedge (mn)^{\frac{1}{2\alpha+1}}$.
The risk of the final estimator satisfies
\begin{align*}
\E_\theta\left[\|\hat\theta - \theta\|^2\right] 
= \sum_{i = 1}^{i^*}\E_\theta\bigg[\bigg(\frac1k\sum_{j:\,i\in I_j}\tilde X_{ij} - \theta_i\bigg)^2\bigg] + \sum_{i = i^* + 1}^{\infty}\theta_i^2
\end{align*}
where
\begin{align*}
\sum_{i = 1}^{i^*}\E_\theta\bigg[\bigg(\frac1k\sum_{j:\,i\in I_j}\tilde X_{ij} - \theta_i\bigg)^2\bigg] 
& = \sum_{i = 1}^{i^*}\E_\theta\bigg[\bigg(\frac1k\sum_{j:\,i\in I_j}\bigg(X_{ij} + E_{ij}\bigg) - \theta_i\bigg)^2\bigg]  \\
& = \sum_{i = 1}^{i^*}\E_\theta\bigg[\bigg(\frac1k\sum_{j:\,i\in I_j}X_{ij} - \theta_i \bigg)^2\bigg]
+ \E\bigg[\bigg(\frac1k\sum_{j:\,i\in I_j}E_{ij}\bigg)^2\bigg] \\
& \leq \frac{\lfloor m\tilde b/k\rfloor \wedge (mn)^{\frac{1}{2\alpha+1}}}{nk} + \frac{\bigg(\lfloor m\tilde b/k\rfloor \wedge (mn)^{\frac{1}{2\alpha+1}}\bigg)\delta^2}{3k} 
\end{align*}
where $E_{ij}$ denotes the uniform error introduced by quantizing $X_{ij}$ 
and we have used the fact that they are mutually independent and independent to $X_{ij}$.
Also recall the definitions of $\delta, k, \tilde b$ as appearing in the algorithm. 
Therefore, we have
\begin{align*}
\E_\theta\left[\|\hat\theta - \theta\|^2\right]  \leq \frac{\lfloor m\tilde b/k\rfloor \wedge (mn)^{\frac{1}{2\alpha+1}}}{nk} + \frac{\left(\lfloor m\tilde b/k\rfloor \wedge (mn)^{\frac{1}{2\alpha+1}}\right)\delta^2}{3k}
+ \frac{c^2}{\left(\lfloor m\tilde b/k\rfloor \wedge (mn)^{\frac{1}{2\alpha+1}}\right)^{2\alpha} }.
\end{align*}
Now we analyze the risk for the three regimes respectively. 

In the first regime where $mb\leq n^{\frac{1}{2\alpha+1}}$, we have
\begin{align*}
k &= \left(\lfloor (mb)^{\frac{2\alpha+1}{2\alpha+2}}n^{-\frac{1}{2\alpha+2}}\rfloor \vee 1\right) \wedge m = 1\\
\delta &= \max\left\{(mb)^{-\frac{2\alpha+1}{2}}, n^{-\frac{1}{2}}\right\} = (mb)^{-\frac{2\alpha+1}{2}},
\end{align*}
and consequently
\begin{align*}
\E_\theta\left[\|\hat\theta - \theta\|^2\right] 
& \leq \frac{m\tilde b}{n} + \frac{m\tilde b}{3(mb)^{2\alpha+1}} + \frac{c^2}{(m\tilde b)^{2\alpha} } \\
& = O\left((mb)^{-2\alpha}\log (mb)\right).
\end{align*}

In the second regime where $b \leq (mn)^{\frac{1}{2\alpha+1}}$ and $mb\geq n^{\frac{1}{2\alpha+1}}$, we have
\begin{align*}
k &= \left(\lfloor (mb)^{\frac{2\alpha+1}{2\alpha+2}}n^{-\frac{1}{2\alpha+2}}\rfloor \vee 1\right) \wedge m
 = \lfloor (mb)^{\frac{2\alpha+1}{2\alpha+2}}n^{-\frac{1}{2\alpha+2}}\rfloor,\\
\delta &= \max\left\{(mb)^{-\frac{2\alpha+1}{2}}, n^{-\frac{1}{2}}\right\} = n^{-\frac{1}{2}},
\end{align*}
and it follows that
\begin{align*}
\E_\theta\left[\|\hat\theta - \theta\|^2\right] 
\leq \frac{4\lfloor m\tilde b/k\rfloor}{3nk} + \frac{c^2}{\lfloor m\tilde b/k\rfloor^{2\alpha} }
= O\left((mbn)^{-\frac{\alpha}{\alpha+1}}(\log n)^{2\alpha}\right).
\end{align*}

In the last regime where $b\geq (mn)^{\frac{1}{2\alpha+1}}$, we have
\begin{align*}
k &= \left(\lfloor (mb)^{\frac{2\alpha+1}{2\alpha+2}}n^{-\frac{1}{2\alpha+2}}\rfloor \vee 1\right) \wedge m = m,\\
\delta &= \max\left\{(mb)^{-\frac{2\alpha+1}{2}}, n^{-\frac{1}{2}}\right\} = n^{-\frac{1}{2}},
\end{align*}
and then
\begin{align*}
\E_\theta\left[\|\hat\theta - \theta\|^2\right] 
&\leq \frac{(mn)^{\frac{1}{2\alpha+1}}}{mn} + \frac{(mn)^{\frac{1}{2\alpha+1}}}{3mn} + \frac{c^2}{\tilde b^{2\alpha} }\\
&= \begin{cases}
O\left((mn)^{-\frac{2\alpha}{2\alpha+1}}\right) &\text{if } b \geq (mn)^{\frac{1}{2\alpha+1}}\log n\\
O\left((mn)^{-\frac{2\alpha}{2\alpha+1}}(\log n)^{2\alpha}\right) &\text{otherwise}
\end{cases}.
\end{align*}

One interesting direction for future work is to study 
adaptivity in distributed estimation.
An adaptive protocol $(\Pi, \hat\theta)$ satisfies 
\[
\liminf_{n\to\infty}\frac{\sup_{\theta\in\Theta(\alpha, c)}\mathbb E\left[\|\hat\theta(\Pi) - \theta\|^2\right]}{\inf_{(\Pi, \check\theta)\in\Gamma(m, b)}\sup_{\theta\in\Theta(\alpha, c)}\mathbb E\left[\|\check\theta(\Pi) - \theta\|^2\right]}<\infty,
\]
for (almost) all $\alpha$ and $c$. 
That is, the protocol should be minimax optimal for all $\theta$ without prior knowledge of the parameter space in which $\theta$ resides. 
While we had conjectured that this may not be possible with only one round of
communication, \citet{szabo2018adaptive} recently 
developed an adaptive estimator using a modification of Lepski's method.

A second interesting direction for future work is distributed estimation of other functionals.
For instance, one might study the sum of squares (or $\ell_2$ norm) of the mean of a normal random vector. 
It would be of interest to understand the minimax risk of the norm of
the mean in a distributed setting, and to develop optimal distributed
protocols for this functional.

Finally, other nonparametric problems should be considered in a
distributed estimation setting. For example, it will be interesting to
study nonparametric estimation of functions with varying smoothness
(e.g., over Besov bodies), and with shape constraints such as
monotonicity and convexity.

\section*{Acknowledgment}
Research supported in part by ONR grant N00014-12-1-0762 and NSF grant DMS-1513594.

\bibliography{distributed_nonparametric_estimation}
\bibliographystyle{icml2018}

\clearpage


\begin{appendix}
\section{Proof of lemmas}

\subsection{Proof of Lemma \ref{lem:supgtrintegral}}
\begin{proof}[Proof of Lemma \ref{lem:supgtrintegral}]
Write
\[
\Theta_\ell(\alpha,c) = \left\{\theta:\sum_{i=1}^\ell i^{2\alpha}\theta_i^2\leq c^2, \theta_i = 0 \text{ for }i\geq \ell+1\right\}\subset\Theta(\alpha, c).
\]
For $\tau \in (0,1)$, write $s_i^2 = (1-\tau)\sigma_i^2$,
and denote by $\pi_\tau(\theta)$ the prior distribution on $\theta$ such that
$\theta_i \sim N(0, s_i^2)$ for $i=1,\dots,\ell$, and $\mathbb P(\theta_i = 0) = 1$ for $i \geq \ell+1$.
For an estimator $\hat\theta$ and its corresponding communication protocol,
we observe that
\begin{align*}
\sup_{\theta\in\Theta(\alpha,c)}\|\hat\theta - \theta\|^2 
&\geq \sup_{\theta\in\Theta_\ell(\alpha,c)}\|\hat\theta - \theta\|^2 \\
&\geq \int_{\Theta_\ell(\alpha,c)}\|\hat\theta - \theta\|^2 \d\pi_\tau(\theta) \\
&\geq I_\tau - r_\tau
\end{align*}
where $I_\tau$ is the integrated risk of the estimator
\[
I_\tau = \int_{\mathbb R^\ell\otimes\{0\}^\infty}\|\hat\theta - \theta\|^2\d\pi_\tau(\theta)
\]
and $r_\tau$ is the residual
\[
r_\tau = \int_{\overline{\Theta(\alpha, c)}}\|\hat\theta - \theta\|^2\d\pi_\tau(\theta)
\]
where $\overline{\Theta(\alpha,c)} = (\mathbb R^\ell\otimes\{0\}^\infty)\backslash\Theta_\ell(\alpha, c)$.
As $\lim_{\tau\to 0}I_\tau = \int_{\Theta}\E_\theta[\|\hat\theta - \theta\|^2] \d\pi(\theta)$, 
it suffices to show that $r_\tau = o(I_\tau)$ as $\ell\to\infty$ for $\tau\in(0,1)$.
Let $B_\ell = \sup_{\theta\in\Theta_\ell(\alpha, c)}\|\theta\|$, 
which is bounded since for any $\theta\in\Theta_\ell(\alpha, c)$
\[
\|\theta\| = \sqrt{\sum_{i=1}^\ell\theta_i^2} = \sqrt{\sum_{i=1}^\ell i^{2\alpha}\theta_i^2} \leq \sqrt{c^2} = c.
\]
We have
\begin{align*}
r_\tau & = \int_{\overline{\Theta_\ell(\alpha, c)}} \E_\theta\left[\|\hat\theta - \theta\|^2\right]\d\pi_\tau(\theta)\\
& \leq 2 \int_{\overline{\Theta_\ell(\alpha, c)}} \left(B_\ell^2 + \|\theta\|^2\right)\d\pi_\tau(\theta)\\
& \leq 2 \left(B_\ell^2\mathbb P\left(\theta\notin\Theta_\ell(\alpha, c)\right) + \left(\mathbb P\left(\theta\notin\Theta_\ell(\alpha, c)\right)\E\left[\|\theta\|^4\right]\right)\right)
\end{align*}
where we have used the Cauchy-Schwarz inequality. Noticing that
\begin{align*}
\E\left[\|\theta\|^4\right] & = \E\left[\left(\sum_{i=1}^\ell\theta_i^2\right)^2\right] \\
& = \sum_{i_1\neq i_2}\E\left[\theta_{i_1}^2\right]\E\left[\theta_{i_2}^2\right] + \sum_{i=1}^\ell\E\left[\theta_i^4\right]\\
& \leq \sum_{i_1\neq i_2} s_{i_1}^2s_{i_2}^2 + 3\sum_{i=1}^\ell s_i^4\\
& \leq 3(\sum_{i=1}^\ell s_i^2)^2 \leq 3 B_\ell^4,
\end{align*}
we obtain
\[
r_\tau \leq 2B_\ell^2\left(\mathbb P\left(\theta\notin\Theta_\ell(\alpha,c)\right) + \sqrt{3\mathbb P\left(\theta\notin\Theta_\ell(\alpha,c)\right)}\right)
\leq 6B_\ell^2\sqrt{3\mathbb P\left(\theta\notin\Theta_\ell(\alpha,c)\right)}.
\]
Thus, we only need to show that $\sqrt{\mathbb P\left(\theta\notin\Theta_\ell(\alpha,c)\right)} = o(I_\tau)$. In fact,
\begin{align*}
\mathbb P\left(\theta\notin\Theta_\ell(\alpha,c)\right)
& = \mathbb P\left(\sum_{i=1}^\ell i^{2\alpha}\theta_i^2 > c^2\right)\\
& = \mathbb P\left(\sum_{i=1}^\ell i^{2\alpha}(\theta_i^2-\E[\theta_i^2]) > c^2 - (1-\tau)\sum_{i=1}^\ell i^{2\alpha}\sigma_i^2\right)\\
& = \mathbb P\left(\sum_{i=1}^\ell i^{2\alpha}(\theta_i^2-\mathbb E[\theta_i^2]) > \tau c^2\right)\\
& = \mathbb P\left(\sum_{i=1}^\ell i^{2\alpha} s_i^2(Z_i^2-1) > \frac{\tau}{1-\tau}\sum_{i=1}^\ell i^{2\alpha}s_i^2\right)
\end{align*}
where $Z_i\sim N(0,1)$.
By Lemma \ref{chisqineq}, we get
\begin{equation*}
\mathbb P(\theta\notin\Theta_\ell (m,c))\leq \exp\left(-\frac{\tau^2}{8(1-\tau)^2}\frac{\sum_{i=1}^\ell i^{2\alpha}s_i^2}{\max_{1\leq i\leq \ell}i^{2\alpha}s_i^2}\right)
=\exp\left(-\frac{\tau^2}{8(1-\tau)^2}\frac{\sum_{i=1}^\ell i^{2\alpha}\sigma_i^2}{\max_{1\leq i\leq \ell}i^{2\alpha}\sigma_i^2}\right).
\end{equation*}
By the assumption that $\frac{\sum_{i=1}^\ell i^{2\alpha}\sigma_i^2}{\max_{1\leq i\leq \ell}i^{2\alpha}\sigma_i^2} = O(\ell)$,
and that $\int_{\Theta}\E_\theta[\|\hat\theta - \theta\|^2] \d\pi(\theta) = O(\ell^\delta)$,
we conclude that $r_\tau = o(I_\tau)$ as $\ell\to\infty$.
\end{proof}

\begin{lemma}[Lemma 3.5 in \citep{tsybakov2008introduction}]\label{chisqineq}
Suppose that $Z_1,\dots,Z_n\sim N(0,1)$ independently. For $t\in(0,1)$ and $\omega_i>0$, $i=1,\dots,n$, we have
\begin{equation*}
\mathbb P\left(\sum_{i=1}^n\omega_i(Z_i^2-1)>t\sum_{i=1}^n\omega_i\right)\leq\exp\left(-\frac{t^2\sum_{i=1}^n\omega_i}{8\max_{1\leq i\leq n}\omega_i}\right).
\end{equation*}
\end{lemma}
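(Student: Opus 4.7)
The plan is to apply the Chernoff (Laplace transform) method. Writing $S = \sum_{i=1}^n \omega_i$ and $W = \max_{1 \le i \le n} \omega_i$, for any $\lambda > 0$ with $\lambda W < 1/2$ independence gives
\begin{equation*}
\mathbb P\!\left(\sum_{i=1}^n \omega_i (Z_i^2 - 1) > tS\right) \leq e^{-\lambda t S}\prod_{i=1}^n \mathbb E\!\left[e^{\lambda \omega_i (Z_i^2-1)}\right],
\end{equation*}
so it suffices to bound each MGF factor and then optimize $\lambda$.

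For the MGF, the standard Gaussian calculation yields $\mathbb E[e^{s(Z^2-1)}] = e^{-s}(1-2s)^{-1/2}$ for $s<1/2$, whose logarithm has the Taylor expansion $-s - \tfrac{1}{2}\log(1-2s) = \sum_{k\geq 2} (2s)^k/(2k)$. Comparing term by term with a geometric series shows $-s - \tfrac{1}{2}\log(1-2s) \leq s^2/(1-2s)$, and in particular $\leq 2s^2$ whenever $s \leq 1/4$. Applying this with $s = \lambda \omega_i$ under the restriction $\lambda \leq 1/(4W)$ gives
\begin{equation*}
\sum_{i=1}^n \log \mathbb E\!\left[e^{\lambda \omega_i (Z_i^2-1)}\right] \leq 2\lambda^2 \sum_{i=1}^n \omega_i^2 \leq 2\lambda^2 W S.
\end{equation*}

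It remains to minimize the Chernoff exponent $g(\lambda) = -\lambda tS + 2\lambda^2 \sum_i \omega_i^2$ subject to $\lambda \leq 1/(4W)$. The unconstrained minimizer is $\lambda^\star = tS / (4\sum_i \omega_i^2)$. If $\lambda^\star \leq 1/(4W)$, substituting yields $g(\lambda^\star) = -t^2S^2/(8\sum_i \omega_i^2) \leq -t^2 S/(8W)$ because $\sum_i \omega_i^2 \leq WS$. Otherwise one has $\sum_i\omega_i^2 < tSW$; taking $\lambda = 1/(4W)$ produces $g(\lambda) = -tS/(4W) + \sum_i\omega_i^2/(8W^2) < -tS/(4W) + tS/(8W) = -tS/(8W) \leq -t^2 S/(8W)$, using $t<1$ at the last step. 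In either case the claimed bound $\exp(-t^2 S/(8W))$ follows.

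The main obstacle is the mild but unavoidable case split in the final optimization: since the Gaussian MGF explodes as $s \uparrow 1/2$, the unconstrained Chernoff minimizer need not be feasible, and one must verify that the boundary choice $\lambda = 1/(4W)$ still suffices. The Taylor bound on the log-MGF and the rest of the argument are routine.
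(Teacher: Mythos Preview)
Your proof is correct. The paper does not actually supply a proof of this lemma; it is simply quoted from Tsybakov's book (Lemma~3.5 there), so there is no in-paper argument to compare against. Your Chernoff--Cram\'er approach with the Taylor bound $-s-\tfrac12\log(1-2s)\le 2s^2$ for $s\le 1/4$ and the case split on whether the unconstrained optimizer $\lambda^\star$ is feasible is exactly the standard route to this inequality, and each step checks out, including the use of $t<1$ at the very end to pass from $-tS/(8W)$ to $-t^2S/(8W)$.
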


\subsection{Proof of Lemma \ref{lem:lowerboundwithprior}}
\begin{proof}[Proof of Lemma \ref{lem:lowerboundwithprior}]
Recall that we have 
$\theta_i \sim N(0, \sigma_i^2)$ and
$X_{ij} | \theta_i \sim N(\theta_i, \varepsilon^2)$
for $i = 1, \dots, \ell$, and $j = 1, \dots, m$.
For convenience, write $\theta = (\theta_1,\dots,\theta_n)$, 
$X_j = (X_{1j},\dots,X_{\ell j})$ and $X = (X_1,\dots, X_j)$.
Suppose that we have a set of encoding functions $\Pi_j: \mathbb R^\ell\to\{1,\dots, M_j\}$
for $j = 1,\dots, m$ satisfying that $\sum_{j=1}^m\log M_j \leq mb$.
Let $W_j = \Pi_j(X_j)$ be the message generated from the $j$th machine,
and write $W = (W_1,\dots, W_m)$.
Furthermore, we write $d_i = \E (\theta_i - \E(\theta_i|W))^2$
and $d_{ij} = \E(X_{ij}|\theta, W_j)^2$.
We then have
\begin{align*}
\sum_{j=1}^m \log M_j & \geq H(W) \\ 
& \geq I(\theta, X; W) \\
& = I(\theta; W) + \sum_{j=1}^mI(X_j; W_j | \theta) \\
& = h(\theta) - h(\theta|W) + \sum_{j=1}^m \left(h(X_j| \theta) - h(X_j|\theta,W)\right) \\
& = \sum_{i=1}^\ell  h(\theta_i) - \sum_{i=1}^\ell  h(\theta_i|\theta_{1:(i-1)}, W)
      + \sum_{j=1}^m \left( \sum_{i=1}^\ell  h(X_{ij}| \theta) - \sum_{i=1}^\ell h(X_{ij}|X_{1:(i-1),j},\theta,W_j)\right) \\
& \geq \sum_{i=1}^\ell  h(\theta_i) - \sum_{i=1}^\ell  h(\theta_i|W)
      + \sum_{j=1}^m \left( \sum_{i=1}^\ell  h(X_{ij}| \theta) - \sum_{i=1}^\ell h(X_{ij}|\theta,W_j)\right) \\
& \geq \sum_{i=1}^\ell  \left(\frac{1}{2}\log\frac{\sigma_i^2}{d_i} + \sum_{j=1}^m\frac{1}{2}\log\frac{\varepsilon^2}{d_{ij}}\right).
\numberthis\label{eqn:sumlogM}
\end{align*}

In order to obtain the relationship between $d_i$'s and $d_{ij}$'s, 
we consider the random vector $Y = \E(\theta|X)$, 
i.e., $Y_i = \E(\theta_i|X)$ for $i = 1,\dots, n$.
In fact, $Y_i$ takes the form 
\[
Y_i = \frac{\frac{1}{\varepsilon^2}}{\frac{1}{\sigma_i^2} + \frac{m}{\varepsilon^2}}\sum_{j=1}^m X_{ij}.
\]

We first calculate the optimal mean squared error of estimating $Y_i$ based on $\theta$ and $W$
\[
\E\left[(Y_i - \E(Y_i|\theta, W))^2\right] 
= \left(\frac{\frac{1}{\varepsilon^2}}{\frac{1}{\sigma_i^2} + \frac{m}{\varepsilon^2}}\right)^2\E\left[\left(\sum_{j=1}^m\left(X_{ij} - \E[X_{ij}|\theta, W_j]\right)\right)^2\right]
= \left(\frac{\frac{1}{\varepsilon^2}}{\frac{1}{\sigma_i^2} + \frac{m}{\varepsilon^2}}\right)^2\sum_{j=1}^m d_{ij}
\]
where we have used the equality that
\begin{align*}
&\E\left[(X_{ij} - \E[X_{ij}|\theta, W_j])(X_{ij'} - \E[X_{ij'}|\theta, W_{j'}])\right] \\
& = \E\left[X_{ij} - \E[X_{ij}|\theta, X_{ij'}, W_j, W_{j'}]\right] \E\left[X_{ij'} - \E[X_{ij'}|\theta, W_{j'}]\right] = 0
\end{align*}
for $j\neq j'$.

We then calculate the mean squared error of best linear estimator of $Y_i$ using $\theta_i$ and $T_i = \E(\theta_i | W)$.
In particular, we search for $\beta_1$ and $\beta_2$ such that
\[
\E\left[(Y_i - \beta_1\theta_i - \beta_2T_i)^2\right]
\]
is minimized. 
Towards that end, we calculate
\[
\E\left[Y_i^2\right] = \left(\frac{\frac{1}{\varepsilon^2}}{\frac{1}{\sigma_i^2} + \frac{m}{\varepsilon^2}}\right)^2\E\left[\left(\sum_{j=1}^mX_{ij}\right)^2\right]
= \left(\frac{\frac{1}{\varepsilon^2}}{\frac{1}{\sigma_i^2} + \frac{m}{\varepsilon^2}}\right)^2\left(m^2\varepsilon^2 + m\sigma_i^2\right)
= \frac{\frac{m}{\varepsilon^2}}{\frac{1}{\sigma_i^2} + \frac{m}{\varepsilon^2}}\sigma_i^2
= \sigma_i^2 - \sigma_0^2
\]
where we write $\sigma_0^2 = \frac{1}{\frac{1}{\sigma_i^2} + \frac{m}{\varepsilon^2}}$ to ease our notation.
In addition, we have
\[
\E\left[\theta_i^2\right] = \sigma_i^2 \text{ and }
\E \left[Y_i\theta_i\right] =\frac{\frac{1}{\varepsilon^2}}{\frac{1}{\sigma_i^2} + \frac{m}{\varepsilon^2}}\E\left[\sum_{j=1}^m\theta_iX_{ij}\right] = \sigma_i^2 - \sigma_0^2.
\]
Furthermore, we notice that since $T_i = \E\left[\theta_i | W\right]$,
\[
\E\left[T_i(\theta_i - T_i)\right] = \E\left[\E\left[T_i(\theta_i - T_i)\right]|W\right] = \E\left[T_i\right] \E\left[T_i - T_i\right] = 0,
\]
and hence
\[
d_i = \E\left[(\theta_i - T_i)^2\right] = \E\left[\theta_i(\theta_i - T_i) - T_i(\theta_i - T_i)\right] = \E\left[\theta_i(\theta_i - T_i)\right] = \E\left[\theta_i^2\right] - \E\left[\theta_iT_i\right],
\]
from which we obtain
\[
\E\left[T_i^2\right] = \E\left[\theta_iT_i\right] = \sigma_i^2 - d_i.
\]
Finally, we have
\[
\E\left[Y_iT_i\right] = \E\left[(\theta_i + (Y_i - \theta_i))T_i\right] = \E\left[\theta_iT_i\right] + \E[Y_i- \theta_i]\E[T_i]  = \E\left[\theta_iT_i\right] = \sigma_i^2 - d_i
\]
where the equality follows from the fact that $\theta_i$ and $\theta_i - Y_i$ are independent.
To sum up, the covariance matrix of $(Y_i, \theta_i, T_i)$ is 
\[
\begin{pmatrix}
\sigma_i^2 - \sigma_0^2 & \sigma_i^2 - \sigma_0^2 & \sigma_i^2 - d_i \\
\sigma_i^2 - \sigma_0^2 & \sigma_i^2 & \sigma_i^2 - d_i \\
\sigma_i^2 - d_i & \sigma_i^2 - d_i & \sigma_i^2 - d_i
\end{pmatrix}.
\]
Getting back to $\beta_1$ and $\beta_2$, they should satisfy
\[
\E\left[\theta_i(Y_i - \beta_1\theta_i - \beta_2T_i)\right] = 0,\quad \E\left[T_i(Y_i - \beta_1\theta_i - \beta_2T_i)\right] = 0.
\]
Solving the equations, we get
\[
\beta_1 = \frac{d_i - \sigma_0^2}{d_i}, \quad \beta_2 = \frac{\sigma_0^2}{d_i},
\]
and 
\[
\E\left[(Y_i - \beta_1\theta_i - \beta_2T_i)^2\right] = \sigma_0^2 - \frac{\sigma_0^4}{d_i}.
\]

Since conditional means minimize mean squared errors, we have
\[
\E\left[(Y_i - \E(Y_i|\theta, W))^2\right] \leq \E\left[(Y_i - \beta_1\theta_i - \beta_2T_i)^2\right]
\]
and therefore, 
\[
\left(\frac{\frac{1}{\varepsilon^2}}{\frac{1}{\sigma_i^2} + \frac{m}{\varepsilon^2}}\right)^2\sum_{j=1}^m d_{ij} \leq 
\frac{1}{\frac{1}{\sigma_i^2} + \frac{m}{\varepsilon^2}} - \left(\frac{1}{\frac{1}{\sigma_i^2} + \frac{m}{\varepsilon^2}}\right)^2\frac{1}{d_i},
\]
which gives
\[
\sum_{j=1}^m d_{ij} \leq \varepsilon^4\left(\frac{1}{\sigma_i^2} + \frac{m}{\varepsilon^2}-\frac{1}{d_i}\right).
\]
Now we plug this into \eqref{eqn:sumlogM}, and obtain by applying Jensen's inequality that
\begin{align*}
mb & \geq \sum_{i=1}^\ell\left(\frac{1}{2}\log\frac{\sigma_i^2}{d_i} + \sum_{j=1}^m\frac{1}{2}\log\frac{\varepsilon^2}{d_{ij}}\right)\\
& \geq \sum_{i=1}^\ell\left(\frac{1}{2}\log\frac{\sigma_i^2}{d_i} + \frac{m}{2}\log\frac{\varepsilon^2}{\frac{1}{m}\sum_{j=1}^md_{ij}}\right)\\
& \geq \sum_{i=1}^\ell\left(\frac{1}{2}\log\frac{\sigma_i^2}{d_i} + \frac{m}{2}\log\frac{\frac{m}{\varepsilon^2}}{\frac{1}{\sigma_i^2} + \frac{m}{\varepsilon^2}-\frac{1}{d_i}}\right),
\end{align*}
which completes the proof.
\end{proof}
\end{appendix}

\end{document}